\relax
\documentclass[letterpaper]{article}

\usepackage{aaai19}
\usepackage{times}
\usepackage{helvet}
\usepackage{courier}
\usepackage{url}
\usepackage{graphicx}
\frenchspacing

\usepackage{multirow,array,wrapfig,lipsum,booktabs}
\usepackage{amsfonts}       %
\usepackage{microtype}      %
\usepackage{amsmath, amssymb, amsthm}
\usepackage{bbm}
\usepackage{tikz}
\usepackage{pgfplots}
\pgfplotsset{compat=1.14}
\usepackage{nameref}
\usepackage{algorithm} %
\usepackage{algorithmic} %
\DeclareMathOperator{\tr}{tr}

\DeclareMathOperator*{\cov}{cov}
\DeclareMathOperator{\spn}{span}
\newtheorem{theorem}{Theorem}

\newtheorem{lemma}{Lemma}
\newcommand{\T}{\mathrm{T}}
\newcommand{\F}{\mathrm{F}}

\newcommand{\ie}{\textit{i.e. }}

\setlength{\pdfpagewidth}{8.5in}  %
\setlength{\pdfpageheight}{11in}  %
\setlength\titlebox{2.5in}

  \pdfinfo{
/Title (An Efficient Approach to Informative Feature Extraction from Multimodal Data)
/Author (Lichen Wang, Jiaxiang Wu, Shao-Lun Huang, Lizhong Zheng, Xiangxiang Xu, Lin Zhang, Junzhou Huang)}

\setcounter{secnumdepth}{0}  

\title{An Efficient Approach to Informative Feature Extraction \\ from Multimodal Data}
\author{Lichen Wang$^{{1}}$\thanks{This work was done when Lichen Wang was an intern at Tencent AI Lab.}, Jiaxiang Wu$^{{2}}$, Shao-Lun Huang$^{{1}}$, Lizhong Zheng$^{{3}}$,\\
{\bf \Large Xiangxiang Xu$^{{4}}$, Lin Zhang$^{{1}}$, Junzhou Huang$^{{5}}$}\\
$^{1}$ Tsinghua-Berkeley Shenzhen Institute, Tsinghua University, $^{2}$ {Tencent AI Lab}\\
$^{3}$ Department of EECS, Massachusetts Institute of Technology\\
$^{4}$ Department of Electronic Engineering, Tsinghua University\\
$^{5}$ Department of CSE, The University of Texas at Arlington\\
Email: wlc16@mails.tsinghua.edu.cn, jonathanwu@tencent.com, shaolun.huang@sz.tsinghua.edu.cn, \\ lizhong@mit.edu, xuxx14@mails.tsinghua.edu.cn, linzhang@tsinghua.edu.cn,
jzhuang@uta.edu}

\begin{document}
\maketitle
\begin{abstract}
One primary focus in multimodal feature extraction is to find the representations of individual modalities that are maximally correlated. As a well-known measure of dependence, the Hirschfeld-Gebelein-R\'{e}nyi (HGR) maximal correlation becomes an appealing objective because of its operational meaning and desirable properties. However, the strict whitening constraints formalized in the HGR maximal correlation limit its application. To address this problem, this paper proposes Soft-HGR, a novel framework to extract informative features from multiple data modalities. Specifically, our framework prevents the ``hard'' whitening constraints, while simultaneously preserving the same feature geometry as in the HGR maximal correlation. The objective of Soft-HGR is straightforward, only involving two inner products, which guarantees the efficiency and stability in optimization. We further generalize the framework to handle more than two modalities and missing modalities. When labels are partially available, we enhance the discriminative power of the feature representations by making a semi-supervised adaptation. Empirical evaluation implies that our approach learns more informative feature mappings and is more efficient to optimize.
\end{abstract}

\noindent %

\section{Introduction}
Human perception is typically more accurate when objects are presented in multiple modalities, as information from one sense often augments information from another. The idea has risen recent interests to develop learning machines which can extract correlation across modalities, through the perception of equivalence, dependence or association. However, compared to the ease of human perception, identifying the relationship among multiple sources is much harder for machines. The reason lies in the facts that the varying statistic properties carried by data from each source obscure the correlation among modalities, which could be vital for learning effective feature representations \cite{baltrusaitis2018multimodal,sohn2014improved}. Existing methods approaches this problem by Canonical Correlation Analysis (CCA) \cite{hotelling1936relations,akaho2006kernel,andrew2013deep}, Euclidean distance minimization \cite{frome2013devise}, enforcing partial order \cite{vendrov2015order}, etc.

In statistic, the Hirschfeld-Gebelein-R\'{e}nyi (HGR) maximal correlation \cite{hirschfeld1935connection,gebelein1941statistische,renyi1959measures}, as a generalization from the Pearson's correlation \cite{pearson1895note}, is well-known for its legitimacy as a measure of dependence. Such notion is appealing to multimodal feature extraction for many reasons. For example, maximizing the HGR maximal correlation enables us to determine the nonlinear transformations of two variables that are maximally correlated \cite{feizi2017network}. In the perspective of the information theory, the HGR transformation carries the maximum amount of information of $X$ about $Y$, and vice versa \cite{huang2017information1}. As for generality, CCA \cite{hotelling1936relations} and its variants \cite{bach2002kernel,akaho2006kernel,andrew2013deep} can be regarded as the realizations of the HGR maximal correlation with different designs of transformation functions.

However, the HGR maximal correlation suffers from two limitations. Firstly, HGR maximal correlation involves whitening constraints which require each feature to be strictly uncorrelated. Most commonly, the orthogonal geometry is preserved by a whitening process \cite{andrew2013deep,wang2015stochastic}, which relies on the computation of matrix inversion or decomposition. These operations are of high-complexity and may have numerical stability issues for large feature dimensions. Secondly, discriminativeness is not explicitly formulated in the objective of the HGR maximal correlation. In fact, it can lead to desirable performance in downstream supervised tasks only if all the discriminative information ``accidentally'' lies in the common subspace of different modalities. Such assumption may not hold true when input modalities are weakly correlated and do not possess much common information. In this case, the underlying discriminative information is more likely to be omitted after feature mapping, which leads to performance degradation. 

To address these problems, we propose Soft-HGR, a novel framework to learn correlated representation across modalities without hard whitening constraints. The objective of Soft-HGR consists of two inner products, one between the feature mappings and the other between feature covariances. While the formulation rules out the whitening constraints, our model is still able to preserve the same feature geometry as in the original HGR formulation. Therefore, no additional decorrelation process is required in optimization, which promises scalability and stability to the algorithm. Besides, the simple formulation of the Soft-HGR provides additional generalizability to the framework. Soft-HGR can be readily extended to manage more than two modalities and missing modalities. In the semi-supervised settings, we adapt the model to extract the information not only about the dependence between different modalities, but has good predictive power to the labels. Empirically, our method reveals superior efficiency, stability and discriminative performance on real data.

In summary, our main contributions are as follows:
\begin{itemize}
    \item We proposed Soft-HGR, based on the HGR maximal correlation, to extract informative features from multimodal data. The objective is simple and easy to implement;
    \item We proposed an alternative strategy to learn the HGR transformations without explicit whitening constraints. The optimization is more efficient and reliable;
    \item We generalize our framework to handle more than two modalities and missing modalities, and to incorporate discriminative information for semi-supervised tasks.
\end{itemize}

\section{Background: The HGR Maximal Correlation}
The HGR maximal correlation \cite{hirschfeld1935connection,gebelein1941statistische,renyi1959measures} generalizes the well-known Pearson's correlation \cite{pearson1895note} as a general measure of dependence. While it was originally defined on one feature, the multi-feature extension is straightforward. For joint distributed random variables $X$ and $Y$ with ranges $\mathcal{X}$ and $\mathcal{Y}$, the HGR maximal correlation with $k$ features is defined by:
\begin{equation} \label{eq:1}
 \rho^{(k)}\left(X, Y\right) = \sup_{\substack{\mathbf{f}:\mathcal{X} \rightarrow \mathbb{R}^{k}, \frac{1}{k}\mathbb{E}\left [\mathbf{f}\right] = 0, \mathrm{Cov}\left(\mathbf{f}\right) = \mathbf{I} \\
\mathbf{g}: \mathcal{Y} \rightarrow \mathbb{R}^{k}, \mathbb{E}\left[\mathbf{g}\right] = 0,\mathrm{Cov}\left(\mathbf{g}\right) = \mathbf{I}}} \mathbb{E}\left[\mathbf{f}^{\T}(X) \mathbf{g}(Y)\right]
\end{equation}
where $\mathbf{f} = [f_1, f_2, ..., f_k]^{\T}$, $\mathbf{g} = [g_1, g_2, ..., g_k]^{\T}$, and the supremum is taken over all sets of Borel measurable functions with zero-mean and identity covariance. As a legitimate measure of dependence, the HGR maximal correlation satisfies many fundamental properties which are rarely provided. For example, the correlation coefficient is bounded by 0 and 1, corresponding to the case when two random variables are independent, or there exists a deterministic relationship
between $X$ and $Y$ \cite{renyi1959measures}.

There are many reasons why HGR maximal correlation is appealing to multimodal feature extraction. For example, finding the HGR maximal correlation also leads us to the non-linear transformation $\mathbf{f}$ and $\mathbf{g}$. These transformations are the most ``informative'' ones, in the view of information theory, as $\mathbf{f}(X)$ carries the maximum amount of the information towards $Y$ and vice versa \cite{huang2017information1}.

\begin{figure}
  \centering
  \includegraphics[scale=0.064]{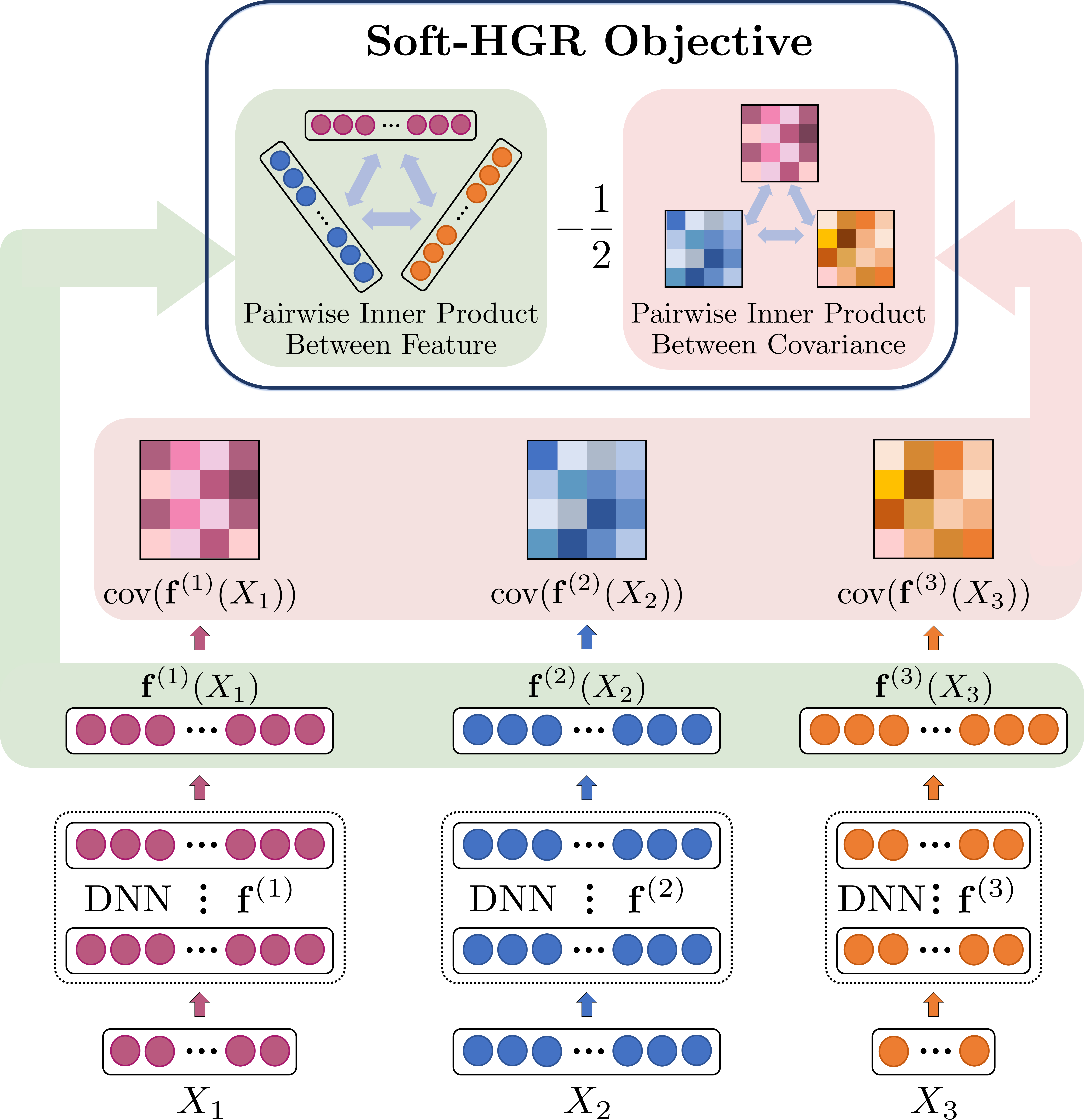}
  \caption{Architecture of Soft-HGR}
  \label{fig:figure_1}
\end{figure}

\subsection{Connections to CCA Based Models}
One strand of research on correlation extraction is based on the work of Hotelling on CCA \cite{hotelling1936relations}, which is later extended to Kernal CCA \cite{bach2002kernel,akaho2006kernel} and Deep CCA \cite{andrew2013deep}. In fact, CCA based models share a very similar objective to the HGR maximal correlation, except their transformation functions are restricted to certain forms. More specifically, CCA and Kernel CCA find optimal feature mappings in linear and reproducing kernel Hilbert space, respectively. Deep CCA takes a different approach, in which the $\mathbf{f}$ and $\mathbf{g}$ are implemented as deep neural networks. Assuming the infinite expressive power of the neural structure, the $\mathbf{f}$ and $\mathbf{g}$ have the capability to approximate the HGR transformations.

\subsection{Limitations}

An impediment to HGR maximal correlation is that the whitening constraints bring high computational complexity to the optimization. Existing models introduce a decorrelation step which forces the covariance to be an identity matrix. The decorrelation process is not scalable since it relies on the computation of the matrices inversion and decomposition, whose time complexity is $O(k^3)$. Besides, the optimization in practice often encounters gradients explosion as we choose large $k$, because the covariance matrices become ill-posed. Some works are proposed to address the problem. Soft-CCA \cite{Chang_2018_CVPR} introduces a decorrelation regularizer based on the $l_1$ penalty to replace the hard whitening constraints. Correlational Neural Network \cite{chandar2016correlational}, inspired by autoencoder, introduces an addition reconstruction loss to replace the whitening constraints. However, both methods break the original feature geometry of the HGR maximal correlation.

Besides, the features extracted from the HGR maximal correlation are not necessarily suitable for downstream discriminative tasks. As a dimension deduction process, there are inevitably some information about data that is discarded during transformation $\mathbf{f}:\mathbb{R}^{\mathcal{X}} \rightarrow \mathbb{R}^{k}$. This is acceptable if the primary goal is to model the correlation between modalities. However, if $\mathbf{f}$ is utilized for future discriminative tasks, we may expect some performance loss. 

\section{Soft-HGR}
In this section, we detail our framework for Soft-HGR. We commence by deriving the optimal solution for the HGR maximal correlation with the whitening constraints. Then we propose an alternative strategy, the low-rank approximation, to approach the HGR problem. we show our proposed objective escapes whitening constraints but still arrives at an equivalent optimum. Finally, we generalize Soft-HGR to handle more than two data modalities and missing modalities, and to incorporate supervised information.

\subsection{The Optimal Feature Transformations}
To simplify the discussions, we assume that $X$ and $Y$ are discrete random variables with range $\mathcal{X} = \{1,2,...,|\mathcal{X}|\}$ and $\mathcal{Y} = \{1,2,...,|\mathcal{Y}|\}$, respectively. However, the discussion is still valid when $X$ and $Y$ are multivariate and continuous in nature. 

We first introduce matrix $\mathbf{B} \in \mathbb{R}^{|\mathcal{X}|\times|\mathcal{Y}|}$ as a function of joint distribution $P_{XY}$ \cite{huang2017information1}. The $(x,y)$-th entry is defined as:
\begin{equation}
    B_{x,y} = \frac{P_{XY}(x,y)}{\sqrt{P_X(x)}\sqrt{P_Y(y)}}
\end{equation}
As a summarization of the data, $\mathbf{B}$ has the following property:
\begin{lemma} The largest singular value of $\mathbf{B}$ is $1$, with the corresponding left and right singular vectors given by:
\begin{equation} \label{singular_vector}
    \begin{aligned}
        \mathbf{u}_0 &= \left[ \sqrt{P_X(1)}, \sqrt{P_X(2)}, \dots, \sqrt{P_X(|\mathcal{X}|)} \right]^{\T} \\
        \mathbf{v}_0 &= \left[ \sqrt{P_Y(1)}, \sqrt{P_Y(2)}, \dots, \sqrt{P_Y(|\mathcal{Y}|)} \right]^{\T}
    \end{aligned}
\end{equation}
\end{lemma}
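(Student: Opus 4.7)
The plan is to first exhibit $(\mathbf{u}_0, 1, \mathbf{v}_0)$ as a singular triple of $\mathbf{B}$, and then show that no singular value can exceed $1$ by reducing the operator-norm variational problem to a Cauchy--Schwarz inequality on expectations. This matches the intuition that the extremal singular value/vector of $\mathbf{B}$ encode the ``trivial'' constant functions, which is why the HGR supremum in equation~(\ref{eq:1}) is taken over mean-zero functions, effectively removing this leading component.

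First I would verify that $\mathbf{u}_0$ and $\mathbf{v}_0$ are unit vectors, using $\sum_x P_X(x)=1$ and the analogous identity for $Y$. Next I would compute $\mathbf{B}\mathbf{v}_0$ entrywise: the $x$-th entry is $\sum_y \frac{P_{XY}(x,y)}{\sqrt{P_X(x)}\sqrt{P_Y(y)}}\sqrt{P_Y(y)}=\frac{1}{\sqrt{P_X(x)}}\sum_y P_{XY}(x,y)=\sqrt{P_X(x)}$, which is exactly $(\mathbf{u}_0)_x$. A symmetric computation gives $\mathbf{B}^{\T}\mathbf{u}_0=\mathbf{v}_0$. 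This establishes that $1$ is a singular value of $\mathbf{B}$ with left/right singular vectors $\mathbf{u}_0,\mathbf{v}_0$.

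For the upper bound I would use the variational form of the top singular value, $\sigma_{\max}(\mathbf{B})=\sup_{\|\mathbf{u}\|=\|\mathbf{v}\|=1}\mathbf{u}^{\T}\mathbf{B}\mathbf{v}$. The key change of variables is $\mathbf{u}_x=\tilde{f}(x)\sqrt{P_X(x)}$ and $\mathbf{v}_y=\tilde{g}(y)\sqrt{P_Y(y)}$, under which $\|\mathbf{u}\|^2=\mathbb{E}[\tilde{f}^2(X)]$, $\|\mathbf{v}\|^2=\mathbb{E}[\tilde{g}^2(Y)]$, and
\begin{equation*}
\mathbf{u}^{\T}\mathbf{B}\mathbf{v}=\sum_{x,y}P_{XY}(x,y)\tilde{f}(x)\tilde{g}(y)=\mathbb{E}\bigl[\tilde{f}(X)\tilde{g}(Y)\bigr].
\end{equation*}
Cauchy--Schwarz then yields $\bigl|\mathbb{E}[\tilde{f}(X)\tilde{g}(Y)]\bigr|\le \sqrt{\mathbb{E}[\tilde{f}^2(X)]\,\mathbb{E}[\tilde{g}^2(Y)]}=1$, so $\sigma_{\max}(\mathbf{B})\le 1$; combined with the previous paragraph this forces $\sigma_{\max}(\mathbf{B})=1$.

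There is no real obstacle here; the only subtle point is choosing the right reweighting $\mathbf{u}_x\mapsto \tilde{f}(x)\sqrt{P_X(x)}$ so that the quadratic form on $\mathbf{B}$ becomes an inner product of two functions under the true distribution, after which Cauchy--Schwarz closes the argument. This reweighting is also what makes the analogy with the HGR problem transparent and will be reused when deriving the optimal transformations in the next subsection.
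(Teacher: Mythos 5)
Your proof is correct, and it reaches the same conclusion by a mildly different route than the paper. The paper bounds the quadratic form: for $\boldsymbol{\psi}$ with entries $\psi_y=\sqrt{P_Y(y)}\,g(y)$ and $\|\boldsymbol{\psi}\|_2=1$, it expands $\|\mathbf{B}\boldsymbol{\psi}\|_2^2=\sum_x P_X(x)\,\mathbb{E}^2\left[g(Y)\mid X=x\right]$ and applies Jensen's inequality conditionally on $X=x$ to get $\|\mathbf{B}\boldsymbol{\psi}\|_2^2\le\mathbb{E}\left[g^2(Y)\right]=1$; achievability is then argued through the equality condition of Jensen (tight exactly when $g$ is constant, i.e.\ $\boldsymbol{\psi}=\mathbf{v}_0$). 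You instead bound the bilinear form $\mathbf{u}^{\T}\mathbf{B}\mathbf{v}=\mathbb{E}\left[\tilde f(X)\tilde g(Y)\right]$ by Cauchy--Schwarz under the joint distribution, and you handle achievability by directly verifying the singular triple $\mathbf{B}\mathbf{v}_0=\mathbf{u}_0$, $\mathbf{B}^{\T}\mathbf{u}_0=\mathbf{v}_0$ with $\|\mathbf{u}_0\|=\|\mathbf{v}_0\|=1$. The reweighting by $\sqrt{P_X}$ and $\sqrt{P_Y}$ is identical in both arguments; your version is arguably cleaner on the achievability side (an explicit eigenvector of $\mathbf{B}^{\T}\mathbf{B}$ rather than an equality-condition argument), while the paper's conditional-expectation computation foreshadows the interpretation of $\mathbf{B}$ as a conditional-expectation operator that is reused in Theorem~\ref{theorem1}. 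Both arguments are complete (modulo the shared implicit assumption that the marginals are strictly positive, without which $\mathbf{B}$ itself is ill-defined).
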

\begin{proof} For any $\boldsymbol{\psi} = [\sqrt{P_Y(y)} \, g(y), \: y = 1,2,\dots,|\mathcal{Y}| ]^{\T}$ that satisfies $||\boldsymbol{\psi}||_2 = 1$, we have
\begin{equation}
\begin{aligned}
    ||\mathbf{B}\boldsymbol{\psi}||_2^2 = &\sum_x \Biggl(\sum_y \frac{P_{XY}(x,y)}{\sqrt{P_X(x)}\sqrt{P_Y(y)}} \sqrt{P_Y(y)} \,g(y)\Biggr)^2 \\
    = & \sum_x P_X(x)\Biggl(\sum_y \frac{P_{XY}(x,y)}{P_X(x)}\,g(y)\Biggr)^2 \\
    = & \sum_x P_X(x) \mathbb{E}^2\left[g(Y)|X=x\right] \\
    \leq & \sum_x P_X(x) \mathbb{E}\left[g^2(Y)|X=x\right] \\
    = & \mathbb{E}\left[g^2(Y)\right] = ||\boldsymbol{\psi}||^2_2 = 1
\end{aligned}
\end{equation}
Therefore the largest singular value $\sigma_0 = \sup ||\mathbf{B}\boldsymbol{\psi}||_2 \leq 1$. The equality only holds when $g(Y)$ is the constant 1 and $\boldsymbol{\psi} = \mathbf{v}_0$. The derivation is similar for $\mathbf{u}_0$.
\end{proof}

Below, we show that finding the most correlated feature transformations for the maximal HGR correlation is equivalent to solving the SVD for $\tilde{\mathbf{B}} = \mathbf{B}-\mathbf{u}_0\mathbf{v}_0^{\T}$.

\begin{theorem} \label{theorem1}
\cite{huang2017information1} Given the SVD of $\mathbf{B} = \mathbf{U}\boldsymbol{\Sigma} \mathbf{V}^{\T} = \sum_{i=0}^{K}\sigma_i\mathbf{u}_i\mathbf{v}_i^{\T}$, with $1 = \sigma_0 \geq \sigma_1 \geq ... \geq \sigma_K$, then optimal feature transformations for the HGR maximal correlation are given by:
\begin{equation} \label{opt_sol}
    \begin{aligned}
    f_i^*(x) &= U_{x,i}/\sqrt{P_X(x)}, i = 1, ..., k, x \in \mathcal{X} \\[1mm]
    g_i^*(y) &= V_{y,i}/\sqrt{P_Y(y)}, i = 1, ..., k, y \in \mathcal{Y}
    \end{aligned}
\end{equation}
\end{theorem}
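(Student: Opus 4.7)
The plan is to reduce the HGR maximization to a classical SVD extremal problem via an isometric change of variables that absorbs the square roots of the marginal probabilities.

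First, I would introduce lifted vectors $\boldsymbol{\phi}_i \in \mathbb{R}^{|\mathcal{X}|}$ with entries $[\boldsymbol{\phi}_i]_x = \sqrt{P_X(x)}\, f_i(x)$, and analogously $\boldsymbol{\psi}_i \in \mathbb{R}^{|\mathcal{Y}|}$. A direct computation using the definition of $\mathbf{B}$ shows that $\mathbb{E}[f_i(X)\,g_j(Y)] = \boldsymbol{\phi}_i^{\T}\mathbf{B}\,\boldsymbol{\psi}_j$, so the HGR objective reduces to $\sum_{i=1}^{k}\boldsymbol{\phi}_i^{\T}\mathbf{B}\,\boldsymbol{\psi}_i$. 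The identity-covariance constraint $\mathrm{Cov}(\mathbf{f}) = \mathbf{I}$ (combined with $\mathbb{E}[\mathbf{f}] = 0$) becomes $\boldsymbol{\phi}_i^{\T}\boldsymbol{\phi}_j = \delta_{ij}$, and the zero-mean condition becomes $\mathbf{u}_0^{\T}\boldsymbol{\phi}_i = 0$, using the explicit form of $\mathbf{u}_0$ given in Lemma 1. Analogous identities hold for $\boldsymbol{\psi}_i$ and $\mathbf{v}_0$.

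Next, I would invoke Lemma 1 to peel off the trivial leading singular component. Since every feasible $\boldsymbol{\phi}_i$ and $\boldsymbol{\psi}_i$ is orthogonal to $\mathbf{u}_0$ and $\mathbf{v}_0$ respectively, replacing $\mathbf{B}$ by $\tilde{\mathbf{B}} = \mathbf{B} - \mathbf{u}_0\mathbf{v}_0^{\T}$ leaves the objective unchanged, and the problem becomes
$$
\max\ \tr\!\left(\boldsymbol{\Phi}^{\T}\tilde{\mathbf{B}}\,\boldsymbol{\Psi}\right) \quad \text{subject to}\quad \boldsymbol{\Phi}^{\T}\boldsymbol{\Phi} = \boldsymbol{\Psi}^{\T}\boldsymbol{\Psi} = \mathbf{I}_k,
$$
where $\boldsymbol{\Phi} = [\boldsymbol{\phi}_1,\ldots,\boldsymbol{\phi}_k]$ and $\boldsymbol{\Psi} = [\boldsymbol{\psi}_1,\ldots,\boldsymbol{\psi}_k]$. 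This is the canonical Ky Fan / von Neumann trace maximization, whose value is $\sum_{i=1}^{k}\sigma_i$ and whose optimizer can be taken as $\boldsymbol{\phi}_i = \mathbf{u}_i$, $\boldsymbol{\psi}_i = \mathbf{v}_i$. Inverting the change of variables returns $f_i^{*}(x) = U_{x,i}/\sqrt{P_X(x)}$ and $g_i^{*}(y) = V_{y,i}/\sqrt{P_Y(y)}$, exactly the expressions in (\ref{opt_sol}).

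The main obstacle I anticipate is not any single computation but rather ensuring that the $\boldsymbol{\phi}_i$ and $\boldsymbol{\psi}_i$ maximizing the trace form consistent \emph{paired} singular vectors of $\tilde{\mathbf{B}}$, rather than independent orthonormal bases of unrelated subspaces. A clean way to close this gap is to apply the von Neumann trace inequality to produce the upper bound $\sum_{i=1}^{k}\sigma_i(\tilde{\mathbf{B}})$ and then exhibit the SVD choice as the achiever; this simultaneously pins down the correct pairing. Repeated singular values would render the optimum non-unique up to block rotations inside each degenerate subspace, but this does not affect the maximal value or the statement of the theorem.
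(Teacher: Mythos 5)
Your proposal is correct and follows essentially the same route as the paper's proof: the isometric change of variables to $\boldsymbol{\Phi},\boldsymbol{\Psi}$, the translation of the mean and covariance constraints into $\boldsymbol{\Phi}^{\T}\mathbf{u}_0=\mathbf{0}$, $\boldsymbol{\Phi}^{\T}\boldsymbol{\Phi}=\mathbf{I}$ (and likewise for $\boldsymbol{\Psi}$), the reduction to $\tr(\boldsymbol{\Phi}^{\T}\tilde{\mathbf{B}}\boldsymbol{\Psi})$, and the SVD alignment. If anything you are slightly more careful than the paper, which simply asserts the final alignment step where you invoke the von Neumann trace inequality to pin down both the optimal value and the pairing of singular vectors.
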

\begin{proof}
\begin{equation}
\begin{aligned}
      & \mathbb{E}\left[\mathbf{f}^{\T}(X) \mathbf{g}(Y)\right] \\
    = & \sum_{x \in \mathcal{X}}\sum_{y \in \mathcal{Y}}P_{XY}(x,y)\mathbf{f}^{\T}(x)\mathbf{g}(y) \\
    = & \sum_{x \in \mathcal{X}}\sum_{y \in \mathcal{Y}} \sqrt{P_X(x)}\mathbf{f}^{\T}(x)\frac{P_{X,Y}(x,y)}{\sqrt{P_X(x)}\sqrt{P_Y(y)}}\sqrt{P_Y(y)}\mathbf{g}(y) \\
    = & \tr(\boldsymbol{\Phi}^{\T}\mathbf{B}\boldsymbol{\Psi}) \label{PhiBPsi}
\end{aligned}
\end{equation}
In \eqref{PhiBPsi} we introduce new variables $\boldsymbol{\Phi} \in \mathbb{R}^{|\mathcal{X}|\times k}$ and $\boldsymbol{\Psi} \in \mathbb{R}^{|\mathcal{Y}|\times k}$, which are connected to $\mathbf{f}$ and $\mathbf{g}$ by:
\begin{equation}
    \begin{aligned}
        \boldsymbol{\Phi} &= \left[\sqrt{P_X(1)}\mathbf{f}(1), \dots, \sqrt{P_X(|\mathcal{X}|)}\mathbf{f}(\mathcal{|X|})\right]^{\T} \\
        \boldsymbol{\Psi} &= \left[\sqrt{P_Y(1)}\mathbf{g}(1), \dots, \sqrt{P_Y(|\mathcal{Y}|)}\mathbf{g}(\mathcal{|Y|})\right]^{\T} \label{var_sub}
    \end{aligned}
\end{equation}

Following the variables substitution, the objective of the HGR maximal correlation can be reformulated as follows:
\begin{align}
 \rho_k\left(X, Y\right) & = 
 \max_{\substack{
 \mathbf{f}:\mathcal{X} \rightarrow \mathbb{R}^{k}, 
 \mathbb{E}\left [\mathbf{f}\right] = \mathbf{0}, 
 \mathrm{Cov}\left(\mathbf{f}\right) = \mathbf{I} \\
 \mathbf{g}: \mathcal{Y} \rightarrow \mathbb{R}^{k}, 
 \mathbb{E}\left[\mathbf{g}\right] = \mathbf{0},
 \mathrm{Cov}\left(\mathbf{g}\right) = \mathbf{I}}} 
 \mathbb{E}\left[\mathbf{f}^{\T}(X) \mathbf{g}(Y)\right] \\ 
& = \max_{
    \substack{
    \boldsymbol{\Phi}:
    \boldsymbol{\Phi}^{\T}\mathbf{u}_0 = \mathbf{0},
    \boldsymbol{\Phi}^{\T}\boldsymbol{\Phi} = \mathbf{I} \\
    \boldsymbol{\Psi}:
    \boldsymbol{\Psi}^{\T}\mathbf{v}_0 = \mathbf{0},
    \boldsymbol{\Psi}^{\T}\boldsymbol{\Psi} = \mathbf{I}}
    }
\tr(\boldsymbol{\Phi}^{\T}\mathbf{B}\boldsymbol{\Psi})\\
& = \max_{
    \substack{
    \boldsymbol{\Phi}:
    \boldsymbol{\Phi}^{\T}\boldsymbol{\Phi} = \mathbf{I} \\
    \boldsymbol{\Psi}:
    \boldsymbol{\Psi}^{\T}\boldsymbol{\Psi} = \mathbf{I}}
    }
\tr(\boldsymbol{\Phi}^{\T}\tilde{\mathbf{B}}\boldsymbol{\Psi}) \label{Bhat}
\end{align}
As for the optimization problem in \eqref{Bhat}, the optimal $\boldsymbol{\Phi}^*$ and $\boldsymbol{\Psi}^*$ should align the left and right singular vectors of $\tilde{\mathbf{B}}$ respectively. Substituting $\{\boldsymbol{\Phi}^*, \boldsymbol{\Psi}^*\}$ back to $\{\mathbf{f}, \mathbf{g}\}$ leads us to the solution in \eqref{opt_sol}.
\end{proof}

For the maximization problem in \eqref{Bhat}, the whitening constraints over $\boldsymbol{\Phi}^{\T}\boldsymbol{\Phi}$ and $\boldsymbol{\Psi}^{\T}\boldsymbol{\Psi}$ are inevitable as they assure the selected features to be mutually orthogonal in the functional space. In the next subsection, we show an alternative formulation for this problem.

\subsection{Alternative: The Low-rank Approximation}
Instead of solving the SVD, we approach this problem by discovering the low-rank approximation of $\tilde{\mathbf{B}}$, where all the cross-modal interactions lies in. Recall the variable equivalence in \eqref{var_sub}, we approximate the $\tilde{\mathbf{B}}$ by:

\begin{equation} \label{obj:low_rank}
    \begin{aligned}
        \mathop {\min } \limits_{\mathbf{f},\mathbf{g}} & \quad \frac{1}{2} \|\tilde{\mathbf{B}} -\boldsymbol{\Phi}\boldsymbol{\Psi}^{\T}\|_{{\F}}^{2} \\
    s.t. & \quad \mathbb{E}\left[\mathbf{f}(X)\right] = \mathbb{E}\left[\mathbf{g}(Y)\right] = \mathbf{0}.
    \end{aligned}
\end{equation}
Note that we do not impose constraints on the $\cov(\mathbf{f}(X))$ or $\cov(\mathbf{g}(Y))$. We will soon argue that this formulation leads to the same feature geometry as the one in \eqref{Bhat}. In order to solve this problem, we introduce the following theorem:

\begin{theorem} \label{lemma_low_rank}
\textbf{(Eckart-Young-Mirsky Theorem)} \cite{eckart1936approximation} Suppose $\mathbf{A} = \mathbf{U}\boldsymbol{\Sigma} \mathbf{V}^{\T}$, then $\mathbf{A}_r = \mathbf{U}_r\boldsymbol{\Sigma}_r \mathbf{V}_r^{\T}$ $= \sum_{i=1}^{r}\sigma_i\mathbf{u}_i\mathbf{v}_i^{T}$ is the optimal solution to the following low-rank approximation problem:
\begin{equation}
    \begin{aligned}
    \mathop {\min } \limits_{\mathbf{A}_r} & \quad \|\mathbf{A} - \mathbf{A}_r \|_{\F}^{2} \\
    s.t. & \quad \text{\emph{rank}}(\mathbf{A}_r) \leq r.
    \end{aligned}
\end{equation}
\end{theorem}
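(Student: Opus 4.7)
The plan is to lower-bound $\|\mathbf{A}-\mathbf{A}_r\|_{\F}^{2}$ for an arbitrary rank-$r$ matrix $\mathbf{A}_r$ by the tail sum $\sum_{i>r}\sigma_i^{2}(\mathbf{A})$, and then verify that the truncated SVD $\sum_{i=1}^{r}\sigma_i\mathbf{u}_i\mathbf{v}_i^{\T}$ attains this bound with equality. The two standard ingredients I would rely on are the identity $\|\mathbf{X}\|_{\F}^{2}=\sum_i \sigma_i^{2}(\mathbf{X})$ and a singular-value perturbation inequality that controls the singular values of $\mathbf{A}-\mathbf{A}_r$ from below in terms of those of $\mathbf{A}$.

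Concretely, I would invoke Weyl's inequality for singular values: for compatibly shaped matrices $\mathbf{X},\mathbf{Y}$, $\sigma_{i+j-1}(\mathbf{X}+\mathbf{Y})\leq\sigma_i(\mathbf{X})+\sigma_j(\mathbf{Y})$. Applying this with $\mathbf{X}=\mathbf{A}-\mathbf{A}_r$, $\mathbf{Y}=\mathbf{A}_r$, and $j=r+1$, and using the key fact $\sigma_{r+1}(\mathbf{A}_r)=0$ (since $\mathbf{A}_r$ has rank at most $r$), yields the pointwise bound $\sigma_{i+r}(\mathbf{A})\leq\sigma_i(\mathbf{A}-\mathbf{A}_r)$ for every $i\geq 1$. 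Squaring and summing gives
\[
\|\mathbf{A}-\mathbf{A}_r\|_{\F}^{2}=\sum_{i\geq 1}\sigma_i^{2}(\mathbf{A}-\mathbf{A}_r)\geq\sum_{i\geq 1}\sigma_{i+r}^{2}(\mathbf{A})=\sum_{i>r}\sigma_i^{2}(\mathbf{A}).
\]
To see that this bound is attained, I would compute directly from the SVD: the truncated matrix satisfies $\mathbf{A}-\sum_{i=1}^{r}\sigma_i\mathbf{u}_i\mathbf{v}_i^{\T}=\sum_{i>r}\sigma_i\mathbf{u}_i\mathbf{v}_i^{\T}$, whose squared Frobenius norm equals $\sum_{i>r}\sigma_i^{2}$ by the orthonormality of the singular vectors, matching the lower bound exactly.

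The main obstacle I anticipate is establishing Weyl's inequality itself if it is not invoked as a black box. A clean derivation uses the Courant--Fischer min--max characterization of singular values together with a subspace-dimension-counting intersection argument. A self-contained alternative bypasses Weyl entirely: for any $r$-dimensional subspace $\mathcal{S}$, the best rank-$r$ approximation of $\mathbf{A}$ with column space inside $\mathcal{S}$ is the orthogonal projection $\mathbf{P}_{\mathcal{S}}\mathbf{A}$; the outer problem then reduces to $\max_{\mathbf{P}^{\T}\mathbf{P}=\mathbf{I}_r}\tr(\mathbf{P}^{\T}\mathbf{A}\mathbf{A}^{\T}\mathbf{P})$, which the Ky~Fan trace-maximization lemma solves by taking the columns of $\mathbf{P}$ to be the top-$r$ left singular vectors of $\mathbf{A}$, reproducing exactly the truncated SVD.
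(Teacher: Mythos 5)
Your argument is correct, but note that the paper does not actually prove this statement: it is quoted as the classical Eckart--Young--Mirsky theorem with a citation to Eckart and Young (1936) and used as a black box, so there is no in-paper proof to compare against. What you supply is the standard self-contained derivation for the Frobenius norm: Weyl's singular-value inequality $\sigma_{i+j-1}(\mathbf{X}+\mathbf{Y})\le\sigma_i(\mathbf{X})+\sigma_j(\mathbf{Y})$ with $j=r+1$ and $\sigma_{r+1}(\mathbf{A}_r)=0$ gives the interlacing bound $\sigma_{i+r}(\mathbf{A})\le\sigma_i(\mathbf{A}-\mathbf{A}_r)$, hence the tail-sum lower bound $\sum_{i>r}\sigma_i^2(\mathbf{A})$, which the truncated SVD attains by orthonormality of the singular vectors; both steps check out. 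Your fallback route (project onto an $r$-dimensional column space, reduce to $\max_{\mathbf{P}^{\T}\mathbf{P}=\mathbf{I}_r}\tr(\mathbf{P}^{\T}\mathbf{A}\mathbf{A}^{\T}\mathbf{P})$, and apply Ky~Fan) is also sound and has the advantage of not requiring Weyl's inequality, though it only covers the Frobenius norm, which is all that is needed here. The only caveat is that if Weyl's inequality is itself not taken as known, the min--max intersection argument you allude to would need to be written out for the proof to be fully self-contained; as a proof of the theorem as stated, either of your two routes is complete and correct.
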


Therefore, the optimal $\boldsymbol{\Phi}^*$ and $\boldsymbol{\Psi}^*$ should follow:
\begin{equation}
    \boldsymbol{\Phi}^*{\boldsymbol{\Psi}^*}^{\T} = \sum_{i=1}^{k}\sigma_i\mathbf{u}_i\mathbf{v}_i^{T} = \mathbf{U}_{1:k}\boldsymbol{\Sigma}_{1:k}\mathbf{V}_{1:k}^{\T}
\end{equation}

The $\boldsymbol{\Phi}$ and $\boldsymbol{\Psi}$ is not unique. Given any constant decomposition of $\boldsymbol{\Sigma}_{1:k} = \mathbf{H}_{1} \mathbf{H}_{2}^{\T}$, there is an associated solution $\boldsymbol{\Phi}^* = \mathbf{U}_{1:k} \mathbf{H}_{1}, \boldsymbol{\Psi}^* = \mathbf{V}_{1:k} \mathbf{H}_{2}$. Equivalent expression for $\mathbf{f}$ and $\mathbf{g}$ is:
\begin{equation} \label{shgr_opt_sol}
    \begin{aligned}
    f_i^*(x) = \left[\mathbf{U}_{1:k} \mathbf{H}_{1}\right]_{x,i}/\sqrt{P_X(x)}, i = 1, ..., k, x \in \mathcal{X} \\[1mm]
    g_i^*(y) = \left[\mathbf{V}_{1:k} \mathbf{H}_{2}\right]_{x,i}/\sqrt{P_Y(y)}, i = 1, ..., k, y \in \mathcal{Y}
    \end{aligned}
\end{equation}

Since $\mathbf{H_1}$ are $\mathbf{H_2}$ are invertable, one can conclude that the optimal feature transformation for Soft-HGR \eqref{shgr_opt_sol} and for the HGR maximal correlation \eqref{opt_sol} are linearly transformable from the one to the other. Namely, they span the same feature space, \ie $\,\spn\{f_1, f_2, ..., f_k\} = \spn\{f_1^*, f_2^* , ..., f_k^*\}$ (resp. for $\mathbf{g}$) and therefore describe same amount of information. One way to understand this equivalence is to imagine that the HGR features are feed into a linear dense layer, and output Soft-HGR features with same dimensions.

\subsection{The Soft-HGR Objective}
Thus far, we prove that the low-rank approximation of $\tilde{\mathbf{B}}$ also leads to the optimal feature transformation. Based on this idea, now we develop the operational objective for Soft-HGR. By expanding \eqref{obj:low_rank}, we have:
\begin{align}
    &\quad\,\,\frac{1}{2}  \|\tilde{\mathbf{B}} -\boldsymbol{\Phi}\boldsymbol{\Psi}^{\T}\|_{{\F}}^{2} \label{low_rank}\\
    &= \frac{1}{2} \| \tilde{\mathbf{B}} \|_{\F}^2 - \tr(\boldsymbol{\Phi}^{\T}\tilde{\mathbf{B}}\boldsymbol{\Psi})
    + \frac{1}{2}\tr(\boldsymbol{\Phi}^{\T}\boldsymbol{\Phi}\boldsymbol{\Psi}^{\T}\boldsymbol{\Psi}) \label{expan}
\end{align}
where the norm of $\tilde{\mathbf{B}}$ given the data is a constant. Minimizing the last two terms with respect to $\mathbf{f}$ and $\mathbf{G}$ leads us to the Soft-HGR objective:
\begin{equation} \label{obj:softhgr}
    \begin{aligned}
    \mathop {\max } \limits_{\mathbf{f}, \mathbf{g}} & \quad \mathbb{E} \left[\mathbf{f}^{\T}(X)\mathbf{g}(Y)\right]-\frac{1}{2}\tr\left(\cov(\mathbf{f}(X))\cov(\mathbf{g}(Y))\right) \\ 
    s.t. & \quad \mathbb{E}\left[\mathbf{f}(X)\right] = \mathbb{E}\left[\mathbf{g}(Y)\right] = \mathbf{0}.
    \end{aligned}
\end{equation}
The proposed Soft-HGR consists of two inner products, one between feature mappings and the other between feature covariance. The first term in \eqref{obj:softhgr} is consistent to the objective of the HGR maximal correlation, and the second term is considered as a soft regularizer to replace the whitening constraints.

Follow the practice of Deep CCA, we design transformation functions $\mathbf{f}$ and $\mathbf{g}$ as parametric neural networks. As long as the reachable functional space of the neural structures covers the optimal feature transformation, the Soft-HGR and the HGR maximal correlation will always lead us to the equivalent solution.

\begin{algorithm}[!tb]
\caption{Evaluate Soft-HGR on a mini-batch} \label{alg:1}
\begin{algorithmic}[1] %
\REQUIRE ~~\\ %
Paired data samples of two modalities in a mini-batch of size $m$: $(\mathbf{x}^{(1)}, \mathbf{y}^{(1)}), \cdots, (\mathbf{x}^{(m)}, \mathbf{y}^{(m)})$ \\
Two branches of parameterized neural networks with $k$ output units: $\mathbf{f}$ and $\mathbf{g}$
\ENSURE ~~\\ %
The objective value of Soft-HGR
\STATE Subtract the mean of features: \\
$\quad \mathbf{f}(\mathbf{x}^{(i)}) \leftarrow \mathbf{f}(\mathbf{x}^{(i)}) - \frac{1}{m}\sum_{j=1}^{m}\mathbf{f}(\mathbf{x}^{(j)}), i = 1, \cdots, m$ \\ 
$\quad \mathbf{g}(\mathbf{y}^{(i)}) \leftarrow \mathbf{g}(\mathbf{y}^{(i)}) - \frac{1}{m}\sum_{j=1}^{m}\mathbf{g}(\mathbf{y}^{(j)}), i = 1, \cdots, m$
\STATE Compute the empirical covariance: \\
$\quad \cov(\mathbf{f}) \leftarrow \frac{1}{m-1}\sum_{i=1}^{m}\mathbf{f}(\mathbf{x}^{(i)})\mathbf{f}(\mathbf{x}^{(i)})^{\T}$\\
$\quad \cov(\mathbf{g}) \leftarrow \frac{1}{m-1}\sum_{i=1}^{m}\mathbf{g}(\mathbf{y}^{(i)})\mathbf{g}(\mathbf{y}^{(i)})^{\T}$
\STATE Compute the empirical Soft-HGR objective: \\
$\quad \frac{1}{m-1}\sum_{i=1}^{m}f(\mathbf{x}^{(i)})^{\T}g(\mathbf{y}^{(i)}) - \frac{1}{2}\tr(\cov(\mathbf{f})\cov(\mathbf{g}))$
\end{algorithmic}
\end{algorithm}

\subsection{Optimization} In practise, we do not usually have access to the joint probability distribution $P_{XY}$, but rather paired multimodal samples $(\mathbf{x}^{(1)}, \mathbf{y}^{(1)}), \cdots, (\mathbf{x}^{(m)}, \mathbf{y}^{(m)})$ retrived from this distribution. As common practices, we embrace SGD techniques that operate on mini-batch of data to optimize the Soft-HGR. The prominent concern here is how to the estimate of the sample covariance with only partially seen mini-batches. In fact, we find that simply using the batch covariance as a replacement awards the best performance. This implies the Soft-HGR actually decomposes the empirical $\tilde{\mathbf{B}}$ over every mini-batch. Only in this way the empirical $P_{XY}$ is always consistent with the marginal distribution $P_X$ and $P_Y$, where the covariance is evaluated on. The detailed procedure to calculate the Soft-HGR objective is summarized in Algorithm \ref{alg:1}. %
The overall complexity of Soft-HGR is $O(mk^2)$, which is significantly less compared to $O(mk^2+ k^3)$ for normal HGR implementation, \ie Deep CCA. It is also worth noting that our method does not impose an upper bound on the feature dimension $k$. The optimization is consistently stable for very large $k$.

\subsection{Extension to More or Missing Modalities}
The HGR maximal correlation is originally defined on two random variables. In contrast to reconstruction models \cite{srivastava2012multimodal,zhao2015heterogeneous}, the multi-modal extension for correlation based models is not straightforward. New modalities will bring additional whitening constraints, and the computational complexities scales up. However, in Soft-HGR, the ``soft'' formulation provides more flexibility. Recall that the core idea behind the Soft-HGR is to find an approximation of the $\tilde{\mathbf{B}}$ matrix defined on two modalities. In order to handle more than two modalities, the multimodal Soft-HGR should be able to learn feature transformations which recover all pairwise $\tilde{\mathbf{B}}$ simultaneously. Landing on this idea, let $X_1, \dots, X_d$ be $d$ different modalities, and $\mathbf{f}^{(1)}, \dots, \mathbf{f}^{(d)}$ be their corresponding transformation functions, the multimodal Soft-HGR is defined as:
\begin{equation}\label{equ:mul_obj}
\begin{aligned}
\mathop {\max }\limits_{\mathbf{f}^{(1)},\dots, \mathbf{f}^{(d)}} & \quad {\mathbb{E} \left[\sum_{i \neq j}^{d} {\mathbf{f}^{(i)}}^{\T}(X_i){\mathbf{f}^{(j)}(X_j)}\right]} \\[1mm]
& - \frac{1}{2}\sum_{i \neq j}^{d}\tr\left(\cov(\mathbf{f}^{(i)}(X_i))\cov(\mathbf{f}^{(j)}(X_j))\right)\\[1mm]
s.t. \quad & \quad \mathbf{f}^{(i)}:\mathcal{X}_i \rightarrow \mathbb{R}^{k}; \mathbb{E}\left [\mathbf{f}^{(i)}(X_i) \right] = \mathbf{0}; \\
& \quad i,j = 1, 2, \dots, d.
\end{aligned}
\end{equation}

When $d=3$, Figure \ref{fig:figure_1} provides an illustration for \eqref{equ:mul_obj} with neural network implementations. The DNN structure for each neural branches may vary, depending on the statistical property of the inputs. The overall model extracts the features from every neural branch, and maximize their pairwise Soft-HGR in an additive manner. From an information theoretical perspective, maximizing \eqref{equ:mul_obj} is equivalent to extracting the common information from multiple random variables.

Note that this generalization also provides solutions to deal with data with partially missing modalities. To see this, the first term in \eqref{equ:mul_obj} can be applied only on the presented modalities for each training sample, and the second term is always measurable as it only depends on the marginal distribution of individual modalities.

\subsection{Incorporating Supervised Information} \label{sec:semi}

The primary goal of the above framework is to extract the correlation between modalities. Therefore, any information that is private to the individual modality is eliminated, regardless of its discriminative power. The intuition behind the supervised/semi-supervised adaptation is that feature extraction should be conducted under the guidance of supervised labels, even if they are insufficient.

Assumed that a subset of bi-modal data is associated with discrete labels $Z$ with range $\mathcal{Z} = \{1,2,\dots,|\mathcal{Z}|\}$. In order to receive the supervised information from labels, we feed the joint representation, the concatenation of individual feature mappings, into a softmax classifier. The cross entropy loss is added to the overall objective, with a hyper-parameter $\lambda \in [0, 1]$ to trade off the strength of the unsupervised component:
\begin{equation}
    \begin{split} \label{eq:semi}
    \mathcal{L} = &  (\lambda - 1) \cdot \mathbb{E}\left[\log Q_{Z|XY}\right] - \lambda \mathbb{E} \left[\mathbf{f}^{\T}(X)\mathbf{g}(Y)\right] \\
    & \quad + \frac{ \lambda}{2}\tr\left(\cov(\mathbf{f}(X))\cov(\mathbf{g}(Y))\right)
\end{split}
\end{equation} 
where 
\begin{equation}
    Q_{Z=j|XY} = \frac{\exp\left({\left[\mathbf{f}^{\T}(X), \mathbf{g}^{\T}(Y)\right]\boldsymbol{\theta}_j}\right)}{\sum_{i=1}^{|\mathcal{Z}|}\exp\left({\left[\mathbf{f}^{\T}(X), \mathbf{g}^{\T}(Y)\right]\boldsymbol{\theta}_i}\right)}
\end{equation}

In semi-supervised settings, the supervised softmax loss, the first term in \eqref{eq:semi}, is only effective when labels are presented. The last two terms of \eqref{eq:semi} corresponds to the Soft-HGR loss, which is evaluated independently from labels. The gradients from the label $Z$ are first backpropagated to the individual feature mappings, then affect the feature selection. 

\section{Experiments}
In this section, we evaluate Soft-HGR in the following aspects:

\begin{itemize}
    \item To verify the relationship between the HGR features and Soft-HGR feature is linear;
    \item To compare the efficiency and numerical stability of CCA based models and Soft-HGR;
    \item To demonstrate the power of semi-supervised Soft-HGR on discriminative tasks with limited labels;
    \item To show the performance of Soft-HGR on more than two modalities and missing modalities.
\end{itemize}

\begin{table}[t]
  \caption{The linear correlation between features extracted from the Soft-HGR and the HGR maximal correlation.}
  \label{table_1}
  \begin{center}
  \begin{tabular}{lllll}
    \toprule
    &\multicolumn{3}{c}{Feature dimensions}\\
    \cmidrule{2-4}
    Linear correlation      & 10       & 20      & 40    \\
    \midrule
    Upper Bound             & 10      & 20    &  40   \\
    \midrule
    $\mathbf{f}_{\text{HGR}}(\mathbf{X})$ and $\mathbf{g}_{\text{HGR}}(\mathbf{Y})$         & 1.36    & 2.37     & 3.40        \\
    $\mathbf{f}_{\text{SHGR}}(\mathbf{X})$ and $\mathbf{g}_{\text{SHGR}}(\mathbf{Y})$        & 1.36    & 2.37     & 3.40        \\
    $\mathbf{f}_{\text{SHGR}}(\mathbf{X})$ and $\mathbf{f}_{\text{HGR}}(\mathbf{X})$
                & 9.99     & 20.00      & 39.99    \\
    $\mathbf{g}_{\text{SHGR}}(\mathbf{Y})$ and $\mathbf{g}_{\text{HGR}}(\mathbf{Y})$
                & 10.00    & 20.00      & 39.99   \\
    \bottomrule
  \end{tabular}
  \end{center}
\end{table}

\subsection{Comparing Soft-HGR with HGR} \label{4.1}
The formulation of the Soft-HGR and the original HGR maximal correlation are equivalent except for the way they control whitening. In this section, we compare two methods in terms of linearity, efficiency and stability. 

\subsubsection{Linearity Check} Based on the theory, the HGR and the Soft-HGR transformations should span the same feature space. To verify this, we randomly generated 100K data samples $(x_i, y_i)$ from a randomly chosen joint distribution $P_{XY}$, where $X,Y \in \{1, ..., 50\}$ are both discrete random variables. The HGR feature $\left\{\mathbf{f}_{\text{HGR}}, \mathbf{g}_{\text{HGR}}\right\}$ is obtained by directly solving the SVD for $\tilde{\mathbf{B}}$, which is calculated from empirical joint distribution $\tilde{P}_{XY}$. In order to retrieve the Soft-HGR features $\left\{\mathbf{f}_{\text{SHGR}}, \mathbf{g}_{\text{SHGR}}\right\}$, we first turn the data into one-hot form $\mathbf{X},\mathbf{Y} \in \mathbb{R}^{100K \times 50}$ , then feed them into a two-branch one-layer neural network optimized by Soft-HGR objective. Note that when data are one-hot encoded, all possible functions can be captured by linear operations. Finally, we apply all learned functions to data and run linear CCA between every two feature transformations. Recall that the HGR and Soft-HGR features are linearly transformable from the one to the other. Therefore, the linear correlation between $\{\mathbf{f}_{\text{SHGR}}(\mathbf{X})$, $\mathbf{f}_{\text{HGR}}(\mathbf{X})\}$ and between $\{\mathbf{g}_{\text{SHGR}}(\mathbf{Y})$, $\mathbf{g}_{\text{HGR}}(\mathbf{Y})\}$, in the ideal case, should reach the upper bound. 

Table \ref{table_1} summarizes the simulation result. The HGR and the Soft-HGR extract exactly the same linear correlation between $X$ and $Y$ on different choice of $k$. Besides, the correlation between corresponding features from two models is almost identical to the upper bound, which provides an empirical evidence for our theory.

\begin{table}[t]
  \caption{Phonetic prediction accuracy obtained by different methods on certain percentages of the labeled data in XRMB.}
  \begin{center}
  \begin{tabular}{llll}
    \toprule
    &\multicolumn{3}{c}{Percentages of labels}\\
    \cmidrule{2-4}
    Method                  & 10\%              & 50\%              &  100\%    \\
    \midrule
    Baseline DNN            & 72.2\%            & 81.2\%            &  86.4\%   \\
    \midrule
    PCA + DNN               & 71.5\%            & 80.5\%            &  85.2\%   \\
    CCA + DNN               & 70.7\%            & 79.9\%            &  84.4\%   \\
    Deep CCA + DNN          & 73.2\%            & 80.1\%            &  84.0\%   \\
    Soft-HGR + DNN          & 73.0\%            & 79.9\%            &  83.7\%   \\
    Soft CCA + DNN          & 69.4\%            & 76.0\%            &      78.8\%   \\
    CorrNet                 & 71.2\%            & 79.7\%            &  83.2\%   \\
    \midrule
    Semi Soft-HGR           & \textbf{76.3\%}   & \textbf{85.0\%}   &  \textbf{88.0\%}   \\
    Semi Soft CCA           & 73.6\%   & 82.8\%   & 85.5\%   \\
    \bottomrule
  \end{tabular}
  \end{center}
  \label{table_2}
\end{table}

\subsubsection{Efficiency and Stability} 
In this subsection, we focus on the efficiency and stability provided by two methods in optimization. In particular, we compare the execution time and maximally reachable feature dimension by applying both models to the MNIST handwritten image dataset \cite{lecun1998gradient}, which consists of 60K/10K gray-scale digit images of size $28 \times 28$ as training/testing sets. We follow the experiment setting in \cite{andrew2013deep}, and treat left and right halves of digit images as two modalities $X$ and $Y$. In order to highlight efficiency difference brought by the objectives, we restrict the all the feature transformation to take the linear form. Therefore, the HGR maximal correlation degrades to linear CCA. Both optimizations are executed on a Nvidia Tesla K80 GPU with mini-batch SGD of 5K batchsize. 

Figure \ref{fig:figure_2} compares the execution time on one training epoch with different feature dimensions $k$. As we expected, Soft-HGR is faster than CCA methods by orders of magnitude. In addition, the execution time of CCA method grows quickly with the feature dimensions. This is undesirable in real-world settings where $k$ could be very large. It is also worth noting that CCA experiences numerical issue when feature dimension exceeds 350. The instability arises in that the empirical covariance matrices over some mini-batches become ill-posed, or even non-invertible.

\begin{figure}[t]
  \centering
    \begin{tikzpicture}[scale=0.8]
    \begin{axis}[
        xlabel={Feature dimensions $k$},
        ylabel={Execution time / Seconds},
        xmin=40, xmax=510,
        ymin=0, ymax=23,
        xtick={50, 100, 200, 300, 400, 500},
        ytick={0,5, 10,15, 20},
        legend pos=north west,
        ymajorgrids=true,
        grid style=dashed,
    ]

    \addplot[
        color=blue,
        error bars/.cd,
        y dir=both,
        y explicit
        ]
        coordinates {
        (50,0.27) +- (0.01, 0.01)
        (100,0.29)+- (0.04, 0.04)
        (150,0.30)+- (0.02, 0.02)
        (200,0.34) +- (0.06, 0.06)
        (300,0.36)+- (0.02, 0.02)
        (400,0.40)+- (0.03, 0.03)
        (500,0.45)+- (0.01, 0.01)
        };
        \addlegendentry{Linear Soft-HGR}

    \addplot[
        color=red,
        error bars/.cd,
        y dir=both,
        y explicit
        ]
        coordinates {
        (50,3.17) +- (0.07, 0.07)
        (100,6.03)+- (0.07, 0.07)
        (150,8.82)+- (0.08, 0.08)
        (200,11.23) +- (0.12, 0.12)
        (300,17.08)+- (0.12, 0.12)
        (350, 20.05) +- (0.43,0.43)
        };
        \addlegendentry{CCA}
        \draw [dashed, gray!30!black] (axis cs:350,0) -- (axis cs:350,23) node[pos=0.55, xshift = 18]{$k = 350$};
    \end{axis}
    \end{tikzpicture}
    \caption{Execution time of SGD on CCA and linear Soft-HGR for one training epoch on MNIST data. When $k$ is larger than 350, CCA experiences numerical issues.}
    \label{fig:figure_2}
\end{figure}
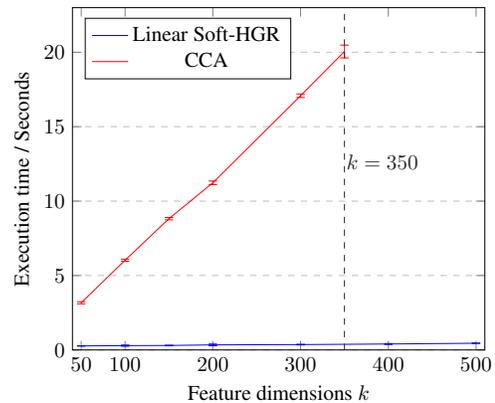

\begin{figure}
  \centering
    \begin{tikzpicture}[scale=0.8]
    \begin{axis}[
        xlabel={Soft-HGR strength $\lambda$},
        ylabel={AUC},
        xmin=0, xmax=0.5,
        ymin=0.655, ymax=0.705,
        xtick={0, 0.1, 0.2, 0.3, 0.4, 0.5},
        ytick={0.66, 0.67, 0.68, 0.69, 0.70},
        legend pos=north west,
        ymajorgrids=true,
        grid style=dashed,
    ]

    \addplot[
        color=red,
        mark=triangle,
        error bars/.cd,
        y dir=both,
        y explicit
        ]
        coordinates {
        (0.0,0.676)
        (0.05,0.691)
        (0.1,0.695)
        (0.15, 0.696)
        (0.2,0.697)
        (0.25, 0.694)
        (0.3, 0.689)
        (0.4, 0.686)
        (0.5, 0.678)
        };
    \end{axis}
    \end{tikzpicture}
    \caption{The effect of hyper-parameter $\lambda$ on AUC}
    \label{fig:figure_3}
\end{figure}
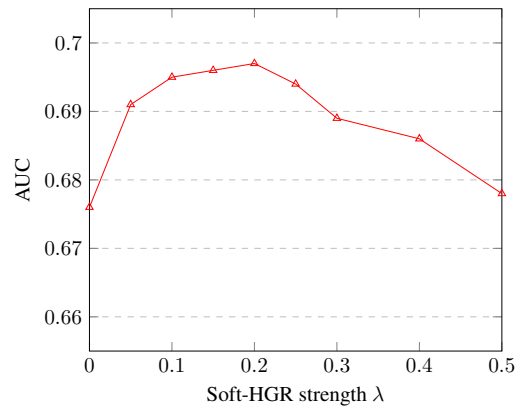

\subsection{Soft-HGR for Semi-supervised Learning}
In this section we demonstrate how Soft-HGR are applied to improve the performance of discriminative tasks. we evaluate our model on the University of Wisconsin X-ray Microbeam Database (XRMB) \cite{westbury1994xrmb} for phonetic classification. XRMB is a bi-modal Dataset consisting of articulatory and acoustic data. Followed the same preprocessing and reconstruction procedures as described in \cite{arora2013multi,wang2015unsupervised}, we obtain the total number of 160K entries of acoustic and vectors articulatory vectors $X \in \mathbb{R}^{273}$ and $Y \in \mathbb{R}^{112}$, corresponding to 41 classes of labels $Z$.

\subsubsection{Experiment Settings} 
While both modalities $X$ and $Y$ are available in the training phase, $Y$ is not provided at the test time. Namely, the model is evaluated by the classification accuracy with only $X$ observed. We expect using $Y$ during training to improve the classification performance, even if they are absent in the test phase. In addition, we partially mask out some portions of labels $Z$ associated with the training data. These two restrictions are consistent with the real world multimodal settings where facial movement data is usually not obtainable, and labels are limited.

\begin{table*}[t]
\caption{AUC obtained by different methods using part of the labels or part of the modalities} \label{table_3}
\begin{center}
\begin{tabular}{m{2.3cm}<{\raggedright} m{1.8cm}<{\raggedright} m{1.3cm}<{\raggedright} m{1.3cm}<{\raggedright} m{1.3cm}<{\raggedright} m{1.3cm}<{\raggedright} m{1.3cm}<{\raggedright}}
    \toprule
    &&\multicolumn{3}{c}{Missing labels} &\multicolumn{2}{c}{Missing modalities}\\
    \cmidrule(l{2pt}r{2pt}){3-5} \cmidrule(l{2pt}r{2pt}){6-7}
    Method                  & No Missing    & 20\%    & 50\%      &  90\%    & 20\%      &  50\% \\
    \midrule
    LR                      & 0.6625    & 0.6623 & 0.6618    &  0.6534        &0.6588     &0.6286\\
    FM                      & 0.6780   & 0.6728  & 0.6723    &  0.6543        &0.6696     &0.6449\\
    Deep FM                 & 0.6803   & 0.6765  & 0.6756    &  0.6613        & 0.6714        & 0.6450\\
    Neural FM               & 0.6760   & 0.6768  & 0.6746    &  0.6570        &0.6661     &0.6574\\
    Semi Soft-HGR           & \textbf{0.6972}   & \textbf{0.6935} & \textbf{0.6906}    &  \textbf{0.6728}   &\textbf{0.6823} & \textbf{0.6682}\\
    \bottomrule
  \end{tabular}
\end{center}
\end{table*}

\subsubsection{Comparing Models} (1) \textbf{Supervised DNN}, which has four hidden layers [1K, 1K, 1K, 1K]. It only takes raw feature $\mathbf{X}$ as inputs and makes predictions on $Z$; Supervised DNN can only deal with labeled data. (2) \textbf{DNN on CCA features}: the DNN structure is the same as in (1) but it accepts transformed $\mathbf{f}(\mathbf{X})$ as input. $\mathbf{f}(\mathbf{X})$ is obtained from PCA, CCA \cite{hotelling1936relations}, Deep CCA \cite{andrew2013deep}, Soft CCA \cite{Chang_2018_CVPR}, Correlational Neural Network (CorrNet) \cite{chandar2016correlational} and our model. Except for PCA which extracts feature only from $\mathbf{X}$, all other methods are trying to find the most correlated $\mathbf{f}(\mathbf{X})$ to $\mathbf{g}(\mathbf{Y})$. For Deep CCA, Soft CCA and our model, the selected $\mathbf{f}$ is a DNN with two hidden layers: [1K, 1K] and $\mathbf{g}$ is linear. The output feature dimensions $k$ is chosen to be 80 for all methods, as higher value leads to unstable gradients in Deep CCA. (3) \textbf{Semi-supervised Model}: we construct semi-supervised Soft-HGR as described in subsection \nameref{sec:semi}, except that the top layer softmax function only takes $\mathbf{f}(X)$ as input. In particular, we use DNN with four hidden layers [1K, 1K, 1K, 1K] for $\mathbf{f}$ and linear function for $\mathbf{g}$. To see the equivalence, when $\lambda = 0$, the network becomes the supervised DNN. For a fair comparison, we also adapt semi-supervised Soft CCA in the same manner. However, we found Deep CCA fails the adaptation because the training is very unstable which prevents us to get a reliable result.
In all DNNs, batch normalization \cite{ioffe2015batch} is applied before ReLU activation function to ensure better convergence. The hyper-parameters for each model are determined by their best average performance on validation set on 5-fold cross validation. Table \ref{table_2} reports the average phonetic prediction accuracy.

\subsubsection{Observations} 
(1) Semi-supervised Soft-HGR achieves the highest accuracy among all models, and the difference becomes more apparent when labels are insufficient. (2) The discriminative performance of Deep CCA and Soft-HGR are similar as they learns equivalent features. (3) $\mathbf{f}(\mathbf{X})$ trained by various unsupervised models is not necessarily more discriminative than raw feature $X_1$. In fact, they only improve classification when labels are extremely limited. In other cases, their performances are inferior to the end-to-end DNN because valuable information may be lost as $\mathbf{f}$ projects the data into lower dimensions. 

\subsection{Soft-HGR for More or Missing Modalities}
In this section we apply our method to recommender system. In such problems, users $X_u$, items $X_i$, and context $X_c$ are three natural modalities. Extensive success achieved by collaborative filtering techniques \cite{breese1998empirical} demonstrates that the correlations between these modalities are useful to infer user behaviors. 

Specifically, we experiment with KKBox's Music Recommendation Dataset \cite{chen2018wsdm}. The goal is to predict the chances of a user listening to a song repetitively after the first listening event within a month. The binary labels $Y=1$ represents the user listens to the song again, and $Y=0$ means the opposite. The user features $X_u$ and item (song) features $X_i$ are explicitly given, and we treat \texttt{source\_system\_tab}, \texttt{source\_screen\_name} and \texttt{source\_type} as context features $X_c$. The categorical features are one-/multi-hot encoded, and continuous ones are normalized. The features corresponding to one modality are concatenated into a single vector, resulting in $X_u$, $X_i$, and $X_c$ as 34656, 623691, and 45 dimensional feature vectors, respectively. The test labels are not disclosed, therefore we use the last $20\%$ of 7M training data as test set\footnote{The split is suggested by the 1st place solution. The last part of the data is used for test set because the data are speculated to be chronologically ordered.}. We test the model under two settings, where labels are insufficient or one modality is missing. In the first setting, we conceal 20\%/50\%/90\% of the labels in training data. In the second scenarios, we randomly mask one of the three modalities as missing in 20\%/50\% of both training and test data, the status of whether data is missing is constructed as a binary flag in the feature vector. The performances are evaluated by the Area under the ROC curve (AUC).

\subsubsection{Comparing Models} We compare our model against to the state-of-art predictive model for sparse data. These include \textbf{Shallow models}: Logistic regression (LR) and Factorization machines (FM) \cite{rendle2010factorization}, and \textbf{Deep models}: Deep FM \cite{guo2017deepfm}, Neural FM \cite{he2017neural}. For models besides LR, the dimension of feature embedding is set to 16. The DNN component for Deep FM, Neural FM and Semi Soft-HGR has consistent structure [100, 100]. \textbf{Semi-supervised Soft-HGR}: The architecture for the unsupervised part is designed mainly according to Figure \ref{fig:figure_1}. However, since fully connected layers is not effective for sparse features, a Bi-Interaction layer, proposed in \cite{he2017neural}, is inserted between the input and DNN structure. The output features from three neural network branches are forwarded to an average pooling layer. The output joint representation is feed to a softmax function for prediction. The hyper-parameter $\lambda$ controls the participation of the Soft-HGR loss. The comparing result is reported in Table \ref{table_3}. In order to highlight the role of Soft-HGR loss, we plot the AUC versus $\lambda$ when Semi Soft-HGR is trained with all labels in Figure \ref{fig:figure_3}.

\subsubsection{Observations} (1) Semi-supervised Soft-HGR achieves significantly better performance than all the baselines. (2) From Figure \ref{fig:figure_3} we can see the performance decreases as it is eliminated from the objective (\ie $\lambda = 0)$. Arguably, the performance gain comes from the introduction of unsupervised Soft-HGR objective.

\section{Conclusion}
In this paper, we propose a multimodal feature extraction framework based on the HGR maximal correlation. Further, we replace the intrinsic whitening constraints with a ``soft'' regularizer which guarantees the efficiency and stability in optimization. Our model is able to cope with more than two modalities, missing modalities, and can be readily generalized to the semi-supervised setting. Extensive experiments show that our proposed model outperforms state-of-the-art multimodal feature selection methods in different scenarios.

\section{Acknowledgement}
The research of Shao-Lun Huang was funded by the Natural Science Foundation of China 61807021, and Shenzhen Municipal Scientific Program JCYJ20170818094022586.

\fontsize{9.0pt}{10.0pt} \selectfont
\bibliography{shgrbib}
\bibliographystyle{aaai}
\end{document}